\DeclareMathOperator*{\argmin}{argmin}
\title[MosAIc: Finding Artistic Connections across Culture with Conditional Image Retrieval
]{MosAIc: Finding Artistic Connections across Culture with Conditional Image Retrieval}
  \author{\Name{Mark Hamilton}$^{1,2}$, 
  \Name{Stephanie Fu}$^2$, 
  \Name{Mindren Lu}$^2$, 
  \Name{Johnny Bui}$^2$,
  \Name{Darius Bopp}$^2$, 
  \Name{Zhenbang Chen}$^2$, 
  \Name{Felix Tran}$^2$, 
  \Name{Margaret Wang}$^2$, 
  \Name{Marina Rogers}$^2$,  
  \Name{Lei Zhang}$^1$, 
  \Name{Chris Hoder}$^1$,
  \Name{William T. Freeman} $^{2,3}$\\
  \addr $^1$Microsoft,
  \addr $^2$MIT,
  \addr $^3$Google
  }
\begin{document}

\maketitle
\begin{figure}[h]
    \centering
    \vspace{-.3in}
    \includegraphics[width=.9\textwidth]{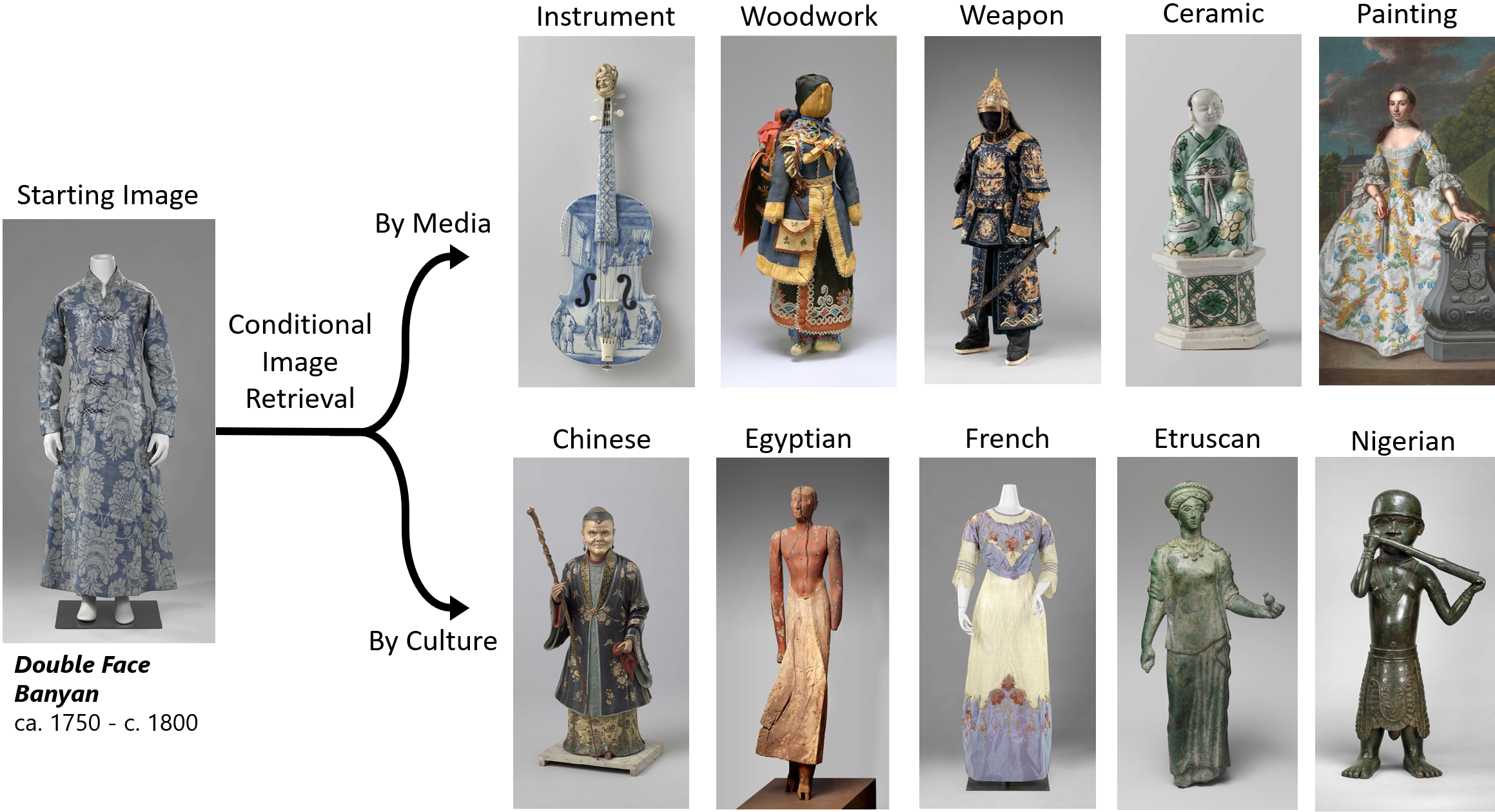}
    \caption{Conditional image retrieval results on artwork from the Metropolitan Museum of Art and Rijksmuseum using media and culture (text above images) as conditioners.}
    \label{fig:feature-graphic}
\end{figure}%

\begin{abstract}

We introduce MosAIc, an interactive web app that allows users to find pairs of semantically related artworks that span different cultures, media, and millennia. To create this application, we introduce Conditional Image Retrieval (CIR) which combines visual similarity search with user supplied filters or ``conditions''. This technique allows one to find pairs of similar images that span distinct subsets of the image corpus. We provide a generic way to adapt existing image retrieval data-structures to this new domain and provide theoretical bounds on our approach's efficiency. To quantify the performance of CIR systems, we introduce new datasets for evaluating CIR methods and show that CIR performs non-parametric style transfer. Finally, we demonstrate that our CIR data-structures can identify ``blind spots'' in Generative Adversarial Networks (GAN) where they fail to properly model the true data distribution.

\end{abstract}
\begin{keywords}
Image Retrieval, Search, GANs, KNN, Ball Trees, Style Transfer, Art, Reverse Image Search
\end{keywords}

\section{Introduction}

In many Image Retrieval (IR) applications, it is natural to limit the scope of the query to a subset of images. For example, returning similar clothes by a certain brand, or similar artwork from a specific artist. Currently, it is a challenge for IR systems to restrict their attention to sub-collections of images on the fly, especially if the subset is very distinct from the query image. This work explores how to create image retrieval systems that work in this setting, which we call ``Conditional Image Retrieval" (CIR). We find that CIR can uncover pairs of artworks within the combined open-access collections of the Metropolitan Museum of Art \citep{Met} and the Rijksmuseum \citep{Rijks} that have striking visual and semantic similarities despite originating from vastly different cultures and millennia and introduce an interactive web app MosAIc (\url{www.aka.ms/mosaic}) to demonstrate the approach. To understand our methods better, we evaluate CIR on the FEI Face Database \citep{thomaz2010new} as well as two new large-scale image datasets that we introduce to help evaluate these systems. These experiments show that CIR can perform a non-parametric variant of ``style transfer'' where neighbors in different subsets have similar content but are in the ``style'' of the target subset of images.

\begin{figure}[tbp]
\centering
\begin{minipage}{.49\textwidth}
  \centering
  
  \includegraphics[width=\linewidth]{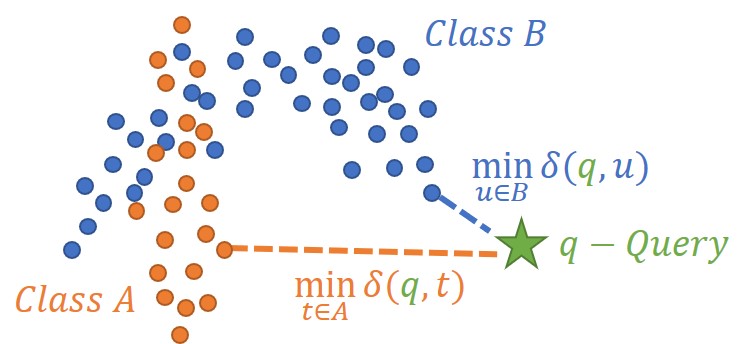}
  \captionof{figure}{Conditional K-Nearest Neighbors for a query point, $q$, and distance, $\delta$, on a simple two class dataset.}
    \label{fig:example}
\end{minipage}%
\hfill
\begin{minipage}{.49\textwidth}
  \centering
        \begin{tabular}{c|cc}
Component         & Space Efficiency                    & Measured \\ \hline\hline
Data              & $\mathcal{O}(n \times d)$           & 16 GB    \\
Tree              & $\mathcal{O}((2n / l) \times d )$   & 65 MB    \\
Cond. Index       & $\mathcal{O}(c \times 2n / l)$      & 6.4 MB   \\ \hline
\end{tabular}
\captionof{table}{Space efficiency of a binary CKNN Tree with number of points, $n$, dimensionality, $d$, leaf size $l$, and number of classes in the index, $c$. Measured results are from a tree built on the Conditional Art dataset: $n=1000000, d=2048, l=500, c=200$.}
    \label{table:mem}
\end{minipage}
\end{figure}

 We also investigate ways to improve IR system performance in the conditional setting. One challenge current systems face is that a core component of many IR systems, K-Nearest Neighbor (KNN) data-structures, only support queries over the entire corpus. Restricting retrieved images to a particular class or filter requires filtering the ``unconditional'' query results, switching to brute force adaptively \citep{matsui2018reconfigurable}, or building a new KNN data-structure for each filter. The first approach is used in several production image search systems \citep{degenova2017, bing_2017, mellina_2017}, but can be costly if the filter is specific, or the query image is far from valid images. Switching to brute force adaptively can mitigate this problem but is limited by the speed of brute force search, and its performance will degrade if the target subset far from the query point. Finally, maintaining a separate KNN data-structure for each potential subset of the data is costly and can result in $2^n$ data-structures, where $n$ is the total number of images. In this work, we show that tree-based data-structures provide a natural way to improve the performance of CIR. More specifically, we prove that Random Projection Trees \citep{dasgupta2008random} can flexibly adapt to subsets of data through pruning. We use this insight to design a modification to existing tree-based KNN methods that allows them to quickly prune their structure to adapt to any subset of their original data using an inverted index. These structures outperform the commonly used CIR heuristics mentioned above. Finally, we investigate the structure of conditional KNN trees to show that they can reveal areas of poor convergence and diversity (``blind spots'') in image based GANs. We summarize the contributions of this work as follows:


\begin{itemize}
\itemsep-.2em 
    \item We introduce an interactive web application to discover connections across cultures, artists, and media in the visual arts.
    \item We prove an efficiency lower bound for solving CIR with pruned Random Projection trees.
    \item We contribute a strategy for extending existing KNN data-structures to allow users to efficiently filter resulting neighbors using arbitrary logical predicates, enabling efficient CIR.
    \item We show that CIR data-structures can discover ``blind spots'' where GANs fail to match the true data.
\end{itemize}

\begin{figure}
    \centering
  \includegraphics[width=.8\linewidth]{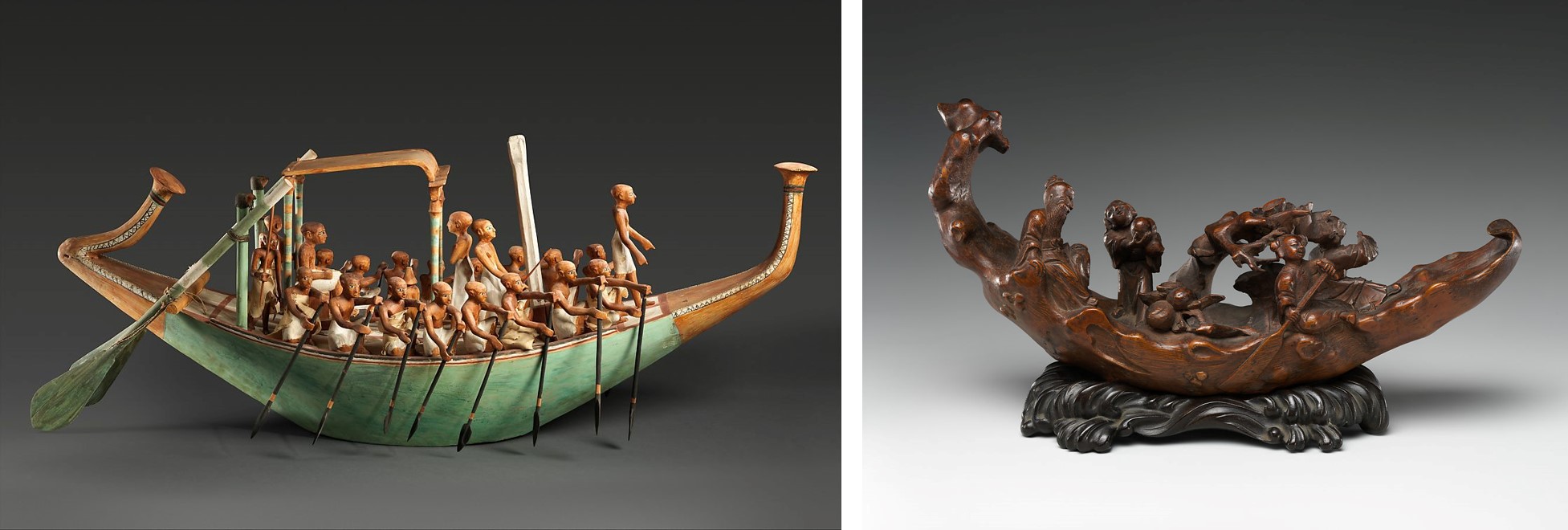}

  \captionof{figure}{A pair of cross cultural images found with CIR. Left: \textit{Model Paddling Boat} from 1980 BC Egypt. Right: \textit{Immortal Raft} from 18th Century China.}
  \label{fig:matched_boats}
\end{figure}

\section{Background} 

IR systems aim to retrieve a list of relevant images that are related to a query image. ``Relevance'' in IR systems often refers to the ``semantics'' of the image such as its content, objects, or meaning. Many existing IR systems map images to ``feature space'' where distance better corresponds to relevance. In feature space, KNN can provide a ranked list of relevant images \citep{manning2008introduction}. Good features and distance metrics aim to align with our intuitive senses of similarity between data \citep{yamins2014performance} and show invariance to certain forms of noise \citep{gordo2016deep}. 
There is a considerable body of work on learning good ``features" for images \citep{bengio2013representation, zhang2016colorful, radford2015unsupervised, koch2015siamese, huh2016makes}.
In this work we leverage features from intermediate layers of deep supervised models, which perform well in a variety of contexts and are ubiquitous throughout the literature. Nevertheless, our methods could apply to any features found in the literature including those from collaborative filtering, text, sound, and tabular data. 

\begin{figure*}[tbp]
    \centering
      \includegraphics[width=.9\linewidth]{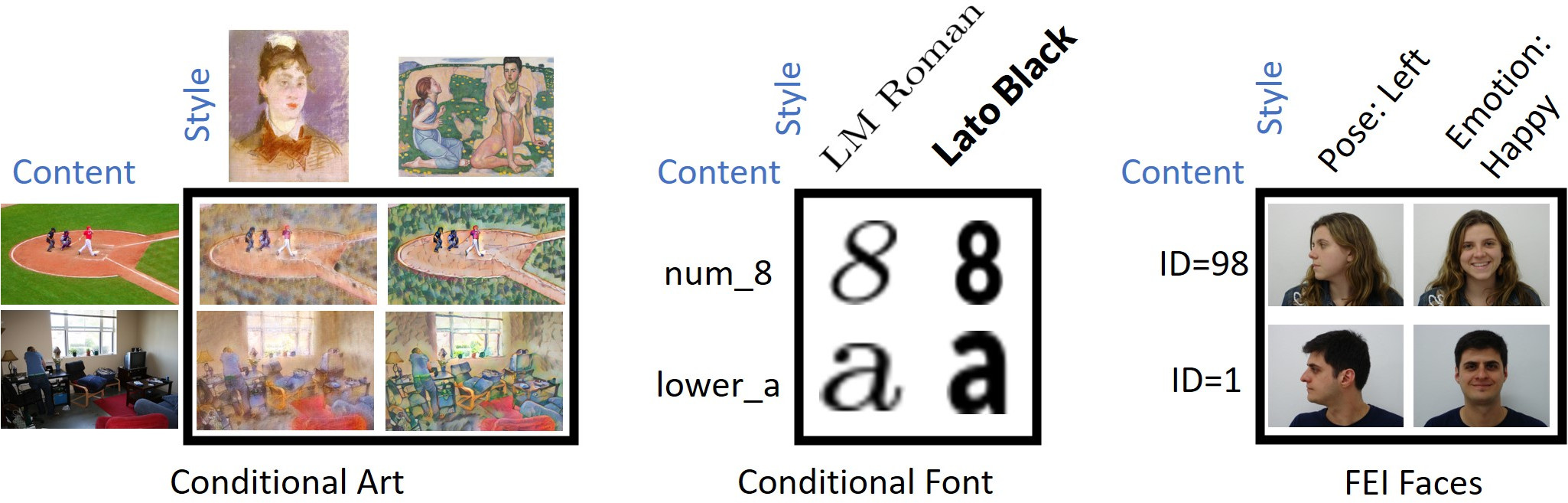}
  \caption{Representative samples from the ConditionalArt dataset (left), ConditionalFont dataset (middle), and FEI Face dataset (right). CIR systems conditioned on style should retrieve images of the same content.}
  \label{fig:content_style_ds}
\end{figure*}

There are a wide variety of KNN algorithms, each with their own strengths and weaknesses. Typically, these methods are either tree-based, graph-based, or hash-based \citep{knnbenchmark}. 
Tree-based methods partition target points into hierarchical subsets based on their spatial geometry and include techniques such as the KD Tree \citep{bentley1975multidimensional}, PCA Tree \citep{bachrach2014speeding}, Ball Tree \citep{omohundro1989five}, some inverted index approaches \citep{invertedindex}, and tree ensemble approaches \citep{yan2019k}. 
Some tree-based data-structures allow exact search with formal guarantees on their performance \citep{dasgupta2008random}. 
Graph-based methods rely on greedily traversing an approximate KNN graph of the data, and have gained popularity due to their superior performance in the approximate NN domain \citep{knnbenchmark, johnson2019billion}.
There are many hash-based approaches in the literature and \cite{hashing-summary} provides a systematic overview. 
To our knowledge, neither graph nor hash-based retrieval methods can guarantee finding the nearest neighbor deterministically. However, fast approximate search is often sufficient for many applications. In our work we focus on tree-based methods because it is unclear how to create an analogous method for graph-based data-structures. Nevertheless, tree-based methods are widely used, especially when exact results are needed. Surprisingly, conditional KNN systems have only received attention recently, even though conditional queries appear in shopping, search, and recommendation systems. To our knowledge, \citep{matsui2018reconfigurable} is the only effort to improve performance of these systems by adaptively switching from a ``query-then-filter" strategy to brute-force at a particular size threshold.

\section{Conditional Image Retrieval}

To generalize an IR system to handle queries over any image subset we generalize the KNN problem to this setting. More formally, the Conditional K-Nearest Neighbors (CKNNs) of a query point, $q$, are the $k$ closest points with respect to the distance function, $\delta$, that satisfy a given logical predicate (condition), $\mathcal{S}$. We represent this condition as a subset of the full corpus of points, $\mathcal{X}$: 

$$CNN(q, \mathcal{S} \subseteq \mathcal{X}) = \argmin_{t \in \mathcal{S}} \delta(q,t) $$

When the conditioner, $\mathcal{S}$, equals the full space, $\mathcal{X}$, we recover the standard KNN definition. Figure \ref{fig:example} shows a visualization of CKNN for a two-dimensional dataset with two classes. With conditional KNN queries it’s possible to combine logical predicates and filters with geometry-based ranking and retrieval. 

To create a CIR system, one can map images to a ``feature-space'', where distance is semantically meaningful, prior to finding CKNNs. One of the most common featurization strategies uses the penultimate activations of a supervised network such as ResNet50 \citep{he2016identity} trained on ImageNet \citep{deng2009imagenet}. 
Alternatively, using ``style'' based features from methods like AdaIN \citep{huang2017arbitrary} enable CIR systems that retrieve images by ``style'' as opposed to content. 
Deep features capture many aspects of image semantics such as texture, color, content, and pose \citep{olah2017feature} and KNNs in deep feature space are often both visually and semantically related. We aim to explore whether this observation holds for conditional matches across disparate subsets of images, which requires a more global feature-space consistency.

\begin{table*}[tbp]
\centering
\begin{tabular}{cl|ccccccccc}
\multicolumn{2}{c}{}          & \multicolumn{9}{|c}{Featurization Method}                                                  \\
Dataset              & Metric & RN50 & RN101 & MN  & SN           & DN           & RNext & dlv3101 & MRCNN        & Random \\ \hline\hline
\multirow{2}{*}{CA}  & @1     & .50  & .51   & .55 & .44          & \textbf{.59} & .46   & .37     & .45          & .0002  \\
                     & @10    & .70  & .68   & .71 & .62          & \textbf{.76} & .65   & .55     & .63          & .002   \\ \hline
\multirow{2}{*}{CF}  & @1     & .41  & .37   & .39 & \textbf{.44} & .43          & .38   & .33     & \textbf{.44} & .016   \\
                     & @10    & .77  & .76   & .76 & \textbf{.80} & .79          & .76   & .73     & .79          & .16    \\ \hline
\multirow{2}{*}{FEI} & @1     & .80  & .84   & .85 & .79          & \textbf{.87} & .86   & .72     & .78          & .005   \\
                     & @10    & .94  & .93   & .94 & .89          & \textbf{.95} & .94   & .86     & .92          & .05    \\ \hline
\end{tabular}
  \vspace{-.07in}
\caption{Performance of CIR (Accuracy $@N$) on content recovery across style variations for both the ConditionalFont (CF) and ConditionalArt (CA) datasets using a variety of features from pre-trained networks. Results show CIR retrieves the same content image across different styles.  For full details on experimental conditions see Section \ref{sec:experimental-details}}.
\label{table:cir_recall}
\end{table*}

\section{Discovering Shared Structure in Visual Art}

We find that CIR on the combined Met and Rijksmusem collections finds striking connections between art from different histories and mediums. These matches show that even across large gaps in culture and time CIR systems can find relevant visual and semantic relations between images. For example, Figure \ref{fig:matched_boats} demonstrates a pair of images that, despite being separated by 3 millennia and 7,000 Kilometers, have an uncanny visual similarity and cultural meaning. More specifically, both works play a role in celebrating and safeguarding passage into the afterlife \citep{werner1922myths, oppenheim2015ancient, hayes1990scepter}. Matches between cultures also highlight cultural exchange and shared inspiration. For example, the similar ornamentation of the Dutch Double Face Banyan (left) and the Chinese ceramic figurine (top row second from left) of Figure \ref{fig:feature-graphic} can be traced to the flow of porcelain and iconography from Chinese to Dutch markets during the 16\textsuperscript{th}-20\textsuperscript{th} centuries \citep{le1974china, volker1954porcelain}. CIR also provides a means for diversifying the results of visual search engines through highlighting conditional matches for cultures, media, or artists that are less frequently explored. We hope CIR can help the art-historical community and the public explore new artistic traditions. This is especially important during the COVID-19 pandemic as many cultural institutions cannot accept visitors. To this end, we introduce an interactive art CIR application, \url{aka.ms/mosaic}, and provide more details in Section \ref{sec:website}. In Section \ref{sec:additional-matches} of the Appendix we also provide additional examples and representative samples.

\section{The MosAIc Web Application}
\label{sec:website}

\begin{figure*}[h]
  \centering
  \includegraphics[width=\linewidth]{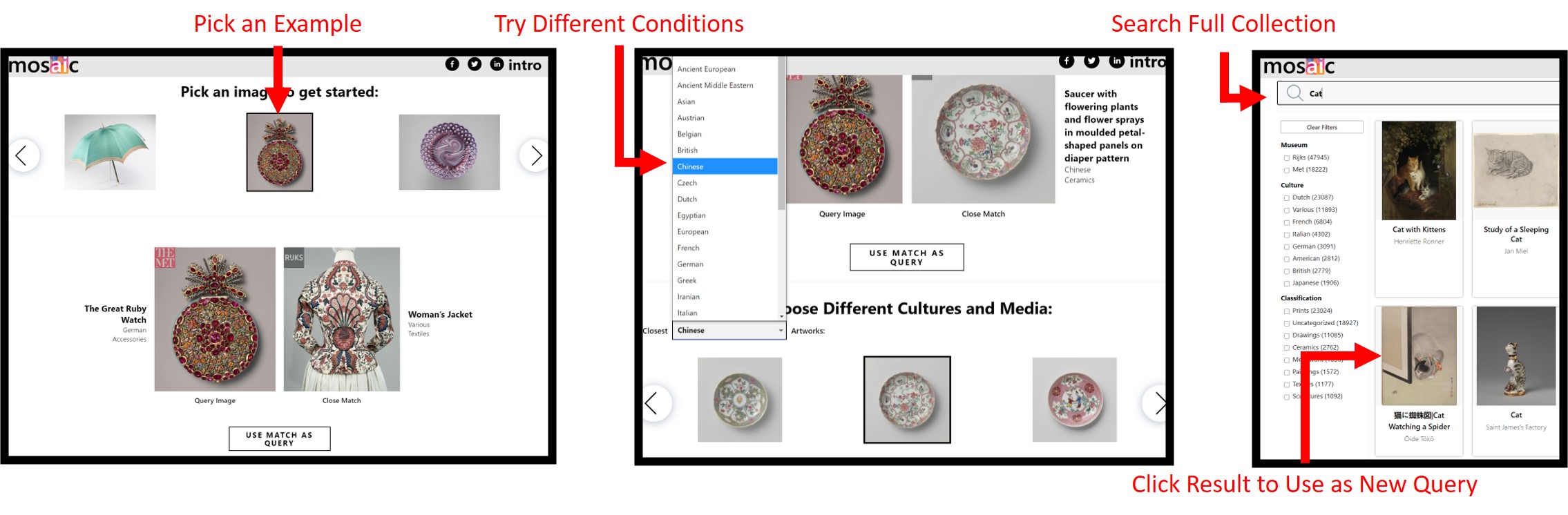}
  \caption{Using the MosAIc web application (\url{aka.ms/mosaic}). After watching a short video explaining the app, users can select a work of art to find conditional matches with (left). Users can find conditional matches for a variety of different cultures and media (middle). To further explore the collection, users can search for new query objects using a conventional search index (right). Users can also construct chains of conditional matches using the ``Use match as query'' button below the main matches.}
  \label{fig:mosaic1}
\end{figure*}

As an application of CIR for the public, we introduce MosAIc (\url{aka.ms/mosaic}), a website that allows users to explore art matches conditioned on culture and medium. Our website aims to show how conditional image retrieval can find surprising and uncanny pairs of artworks that span millennia. We also aim to make it easy for interested users to find new artworks in cultures they might not think to explore during a physical museum visit. Using the MosAIc application, users can choose from a wide array of example objects to use as conditional search queries as shown in the left panel of Figure \ref{fig:mosaic1}. Users can select from an array of different cultures and media to condition their searches as in Figure \ref{fig:mosaic1} middle. Selecting a specific medium or culture, allows the user to browse the top conditional matches in that category and use these matches as new query images. This enables traversing the collection using conditional searches to find relevant content in different areas of the collection. Additionally, for users who want to use a specific work of art as a starting point we have added a conventional text based search engine to quickly find specific works relating to a keyword as in Figure \ref{fig:mosaic1} right.

The mosaic application combines a React \citep{fedosejev2015react} front-end with a back-end built from Azure Kubernetes Service, Azure Search, and Azure App Services. Our front-end features responsive design principles to support for mobile, tablet, desktop, and ultra-wide displays. We also aim to use high-contrast design to make the application more accessible to the low-vision community. To create the conditional search index, we featurize the combines Metropolitan Museum of Art and Rijksmuseum open access collections using ResNet50 from torchvision \cite{marcel2010torchvision}. We then add these features to a Conditional Ball tree for real-time conditional retrieval and deploy this method as a RESTful service on Azure Kubernetes Service. Additionally, we store image metadata, automatically generated image captions, and detected objects in an Azure Search index which allows querying for additional information, and supports text search. To add captions and detected objects to over 500k images we use the Cognitive Services for Big Data \cite{hamilton2020large}.

\section{Evaluating CIR Quality}
\label{sec:eval}

Though finding connections between art is of great importance to the curatorial and historical communities, it is difficult to measure a system's success on this dataset as there are no ground truth on which images \textit{should} match. To understand the behavior of CIR systems quantitatively we investigate datasets with known content images aligned across several different ``styles'' or subsets to retrieve across. More specifically, if the conditioning information represents the image ``style'' and the features represent the ``content'', CIR should find an image with the same content, but constrained to the style of the conditioner, such as ``Ceramic'' or ``Egyptian'' in Figure \ref{fig:feature-graphic}. Through this lens, CIR systems can act as ``non-parametric'' style transfer systems. This approach differs from existing style transfer and visual analogy methods in the literature \citep{huang2017arbitrary, gatys2016image} as it does not generate new images, but rather it finds analogous images within an existing corpora.

To this end, we apply CIR to the FEI face database of 2800 high resolution faces across 200 participants and 14 poses, emotions, and lighting conditions. We also introduce two new datasets with known style and content annotations: the ConditionalFont and ConditionalArt datasets. The ConditionalFont dataset contains 15687 $32\times32$ grayscale images of 63 ASCII characters (content) across 249 fonts (style). The ConditionalArt dataset contains 1,000,000 color images of varying resolution formed by stylizing 5000 content images from the MS COCO \citep{lin2014microsoft} dataset with 200 style images from the WikiArt dataset \citep{nichol2016painter} using an Adaptive Instance Normalization \citep{huang2017arbitrary}. Although this dataset is ``synthetic'', \citep{Jing_2019} show that neural style transfer methods align with human intuition. We show representative samples from each dataset in Figure \ref{fig:content_style_ds}.

With these datasets it's possible to measure how CIR features, metrics, and query strategies affect CIR's ability to match content across styles.
To measure retrieval accuracy, we sampled 10000 random query images. For each random query image, we use CIR to retrieve the query image's KNNs conditioned on a random style. We then check whether any retrieved images have the same content as the original query image. In Table \ref{table:cir_recall}, we explore how the choice of featurization algorithm affects CIR systems. All methods outperform the random baseline of Table \ref{table:cir_recall}, indicating that they are implicitly performing non-parametric content-style transfer. DenseNet (DN) \citep{iandola2014densenet} and Squeezenet (SN) \citep{iandola2016squeezenet} tend to perform well across all datasets. CIR performs well across all three tasks \textit{without} fine tuning to the structure of the datasets, indicating that this approach can apply to other zero-shot image-to-image matching problems.

\section{Fast CKNN with Adaptive Tree Pruning}
\label{sec:improving}

\begin{figure}[t]
    \centering
    \includegraphics[width=\linewidth]{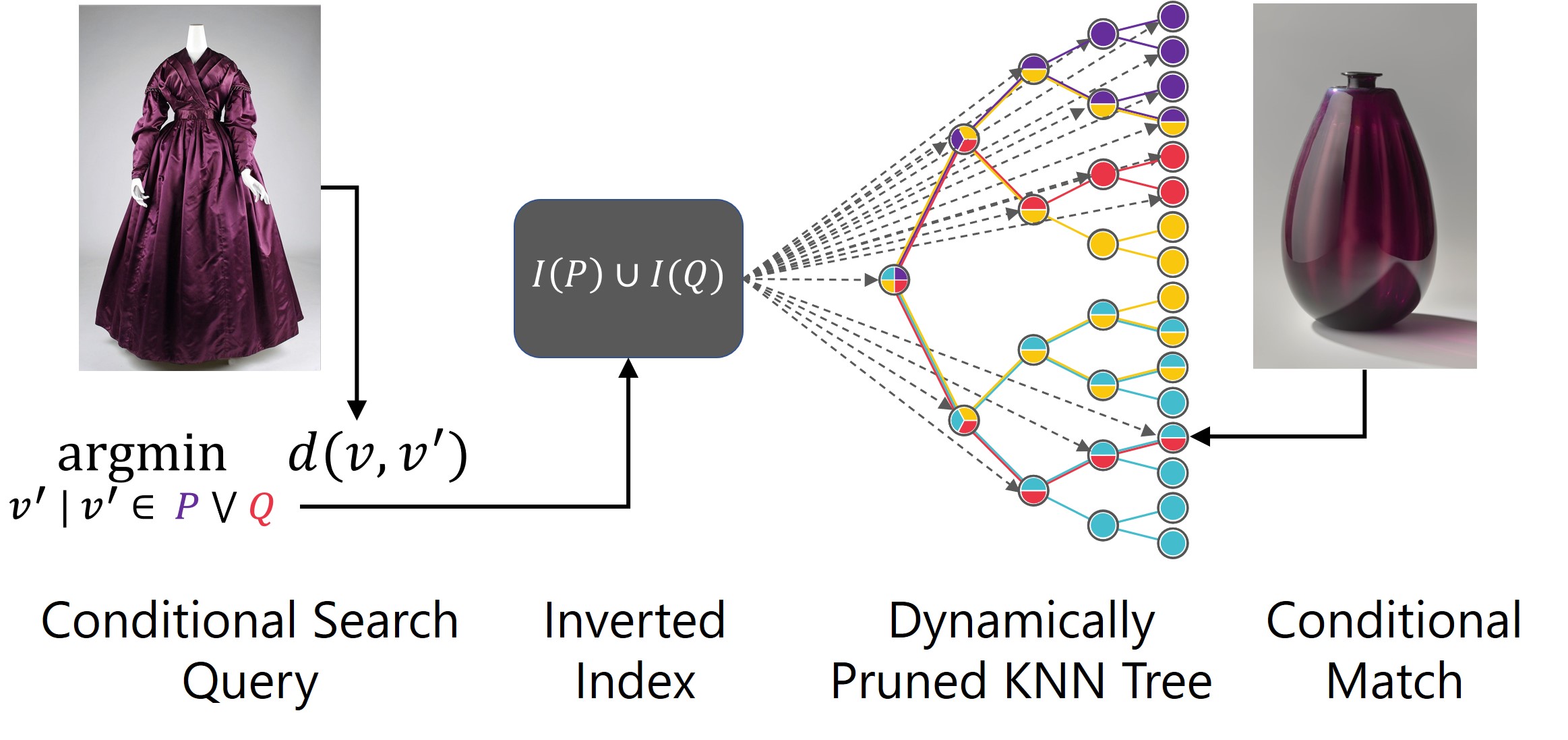}
    \caption{Dynamic tree pruning based CIR architecture. The user specified condition, $P \lor Q$, is translated to an inverted index query and the result is used to prune the unconditional KNN tree where nodes are colored based on which conditions they contain. This pruned tree accelerates conditional search for any subset by reducing the number of nodes considered in tree traversal.}
    \label{fig:pruning}
\end{figure}

In Section \ref{sec:eval} we have shown that CIR is semantically meaningful in several different contexts, but the question remains as to whether this approach affords an efficient implementation that can scale to large datasets with low latency. Conventional IR systems scale to this setting using dedicated data-structures such as trees, spatial hashes, or graphs. There are a wide variety of strategies with provable guarantees in the unconditional setting, but it is not known if existing data-structures can apply naturally to the conditional setting. In this work we focus on extending tree-based methods to the conditional setting.  Tree-based methods are some of the only methods that guarantee \textit{exact} KNN retrieval, and there are already several theoretical results on the performance of these methods \citep{dasgupta2008random, dhesi2010random}. In particular, \citep{dasgupta2008random} show that RandomProjection-Max (RP) trees can adapt to the intrinsic dimensionality of the data and prove bounds that demonstrate the effectiveness of the data-structure. \citep{dhesi2010random} continue this line of reasoning and prove a packing lemma using a bound on the aspect ratio of RP tree cells. These works show that RP trees are effective at capturing the geometry of the training data. Our aim is to show that they also capture the geometry of \textit{subsets} of the training data through their sub-trees. More specifically, we show that for any subset of the training data, one can derive probabilistic bound the number of nodes in the tree that contain this subset. More formally:

\begin{theorem} Suppose an \textsc{RPTree-Max}, $\mathcal{T}$, is built using a dataset $\mathcal{X} \subset \mathbb{R}^D$, of diameter $W$, with doubling dimension $\leq d$. Further suppose $\mathcal{T}$ is balanced with a cell-size reduction rate bounded above by $\gamma$. Let $\mathcal{S} \subseteq \mathcal{X}$ be a subset of the dataset used to build the tree and $\mathcal{B}$ a finite set of radius $R>0$ balls that cover $\mathcal{S}$. For every $0< \epsilon < 1$ there exists a constant, $c > 0$, such that with probability $> 1 - \epsilon$ the fraction of cells that contain points within $\mathcal{S}$ is bounded above by $|\mathcal{B}|2^{-log_\gamma(W/R')}$ where $R' = c R d \sqrt{d} \log(d)$
\end{theorem}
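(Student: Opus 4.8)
\emph{Proof sketch.} The plan is to fix a single ball $B \in \mathcal{B}$ of radius $R$, bound the number of cells of $\mathcal{T}$ that contain a data point lying in $B$, and then sum over $\mathcal{B}$. Every cell that contains a point of $\mathcal{S}$ contains a point of $\mathcal{S} \cap B$ for at least one $B \in \mathcal{B}$, since $\mathcal{B}$ covers $\mathcal{S}$; so by subadditivity it is enough to prove that for each $B$, with probability at least $1 - \epsilon/|\mathcal{B}|$, the cells meeting $B \cap \mathcal{X}$ make up at most a $2^{-\log_\gamma(W/R')}$ fraction of all cells of $\mathcal{T}$. A union bound over the $|\mathcal{B}|$ failure events then gives the stated bound with probability at least $1 - \epsilon$.

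Fix $B$ and set the critical depth $\ell^\star := \lceil \log_\gamma(W/R') \rceil$. Since the cell-size reduction rate of $\mathcal{T}$ is at most $\gamma$, the cell of $\mathcal{T}$ containing $B$ at any depth $\ell \le \ell^\star$ has diameter at least $W\gamma^{-\ell} \ge W\gamma^{-\ell^\star}$, which is of order $R'$. I would first show, from the random-projection geometry of \textsc{RPTree-Max} on data of doubling dimension at most $d$ --- the estimates of \citep{dasgupta2008random}, sharpened by the bounded-aspect-ratio packing lemma of \citep{dhesi2010random} --- that a single split of a cell of diameter $\Delta$ separates the fixed ball $B$ with probability $O(R\,d^{3/2}\log d/\Delta)$. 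Summing this over the $\ell^\star$ splits that $B$ encounters along its root-to-depth-$\ell^\star$ branch, and using $\Delta \ge W\gamma^{-\ell}$ at depth $\ell$, produces a geometric series dominated by its last term $O(R\,d^{3/2}\log d/R')$; because $R' = cRd\sqrt{d}\log d$ this term is $O(1/c)$, so choosing $c$ large enough (it may depend on $\gamma$, on $|\mathcal{B}|$, and on $\epsilon$) makes the whole sum at most $\epsilon/|\mathcal{B}|$. The union bound over the splits on the branch and the extra logarithmic factors needed to make the diameter shrinkage hold uniformly over cells are absorbed into $c$, using balancedness to bound the number of cells at each depth by $2^\ell$.

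On the complementary event none of the first $\ell^\star$ splits cuts $B$, so all of $B$, and in particular all of $B \cap \mathcal{X}$, lies in a single cell at depth $\ell^\star$. Hence the cells meeting $B \cap \mathcal{X}$ are confined to one root-to-depth-$\ell^\star$ path together with the cells of the single subtree hanging below its endpoint. Because $\mathcal{T}$ is balanced, a depth-$\ell^\star$ subtree holds a $2^{-\ell^\star}$ fraction of all cells, while the path contributes only a lower-order term; after folding the constant-factor slack into $c$ this is at most $2^{-\ell^\star} \le 2^{-\log_\gamma(W/R')}$. Summing over $B \in \mathcal{B}$ and taking the union bound finishes the argument.

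The hard part will be the second step: extracting from \citep{dasgupta2008random, dhesi2010random} the exact per-split probability that a random-projection cut separates a small ball, with a dependence on the doubling dimension that matches the $d\sqrt{d}\log d$ inflation defining $R'$, and then keeping the union bound over the relevant cells and levels from eroding the $1 - \epsilon$ guarantee. The remaining pieces --- subadditivity over $\mathcal{B}$, the geometric summation, and the balanced-tree cell count --- are routine once that estimate is in hand.
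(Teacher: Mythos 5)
Your proposal follows essentially the same route as the paper's proof: each covering ball is, with high probability, completely inscribed in a single cell of radius $R' = cRd\sqrt{d}\log(d)$, the $\gamma$-bounded shrinkage rate forces that cell to sit at depth at least $\log_\gamma(W/R')$, and balancedness converts that depth into a $2^{-\log_\gamma(W/R')}$ fraction of cells per ball, summed over $\mathcal{B}$. The only real difference is that where you propose to re-derive the ball-inscription estimate from per-split separation probabilities (your acknowledged ``hard part''), the paper simply imports it as a rescaled version of Theorem 12 of \citep{dhesi2010random}, and it handles the union bound over balls by an induction where you allocate $\epsilon/|\mathcal{B}|$ per ball directly.
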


We point readers to \citep{dhesi2010random}, for the precise definition of an RPTree, cell-size, and the doubling dimension. To sketch the proof, we first generalize an aspect bound from \citep{dhesi2010random} to show that, with high probability, small radius balls can be completely inscribed within small radius RP tree cells. Because it takes several levels before the tree's cells shrink to this size, we can bound this cell's depth and thus the size of its sub-tree relative to the full tree. By considering a collection of balls that cover our target subset, we arrive at the final bound. See section \ref{sec:proof} of the Appendix for a full proof. 

\begin{algorithm2e}[t]
\caption{Querying a CKNN Tree}\label{algo:query}
\SetKwData{Left}{left}\SetKwData{This}{this}\SetKwData{Up}{up}
\SetKwFunction{Union}{Union}\SetKwFunction{FindCompress}{FindCompress}
\SetKwInOut{Input}{input}\SetKwInOut{Output}{output}
\SetKwFunction{FSearchNode}{SearchNode}
\SetKwProg{Fn}{def}{:}{}
\Input{A point, $q$, a condition,\ $\mathcal{S} \subseteq \mathcal{X}$, a tree, $root$, and an inverted index, $I$}
\Output{Closest point, $p^* \in \mathcal{S}$, to $q$}
$validNodes \leftarrow \bigcup_{s \in \mathcal{S}} I(s)$;
$p^* \leftarrow$ null

\Fn{\FSearchNode{$n$}}{
 
    \If{$n \in validNodes$}{
 
        \eIf{$n$ is a leaf node}{

            $p \leftarrow$ closest point in $\mathcal{S}$
    
            \If{$d(p, q) < d(p^*, q)$}{$p^* \leftarrow p$}

        }{

          $potentials \leftarrow$ children of $n$ which could hold a closer point

          \For{$child$ in $potentials$}{\FSearchNode{child}}
        }
    }

}
\FSearchNode{root}; \KwRet $p*$
\end{algorithm2e}

\begin{figure}[t]
\centering
  \includegraphics[width=.8\linewidth]{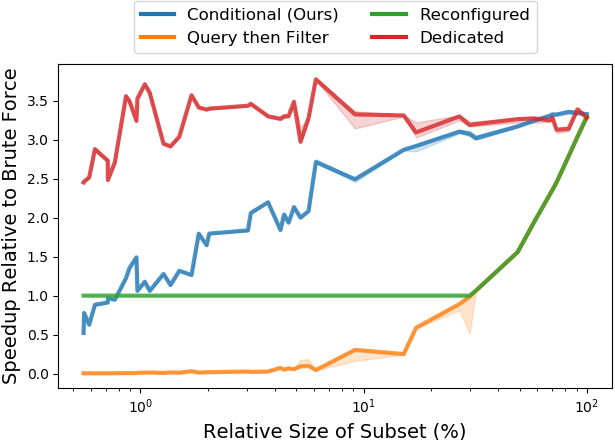}
    \vspace{-.07in}
  \caption{Query time of Conditional KNN approaches. Our approach (Conditional) achieves query performance approaching that of an tree recreated specifically for each query (Dedicated) \textit{without the expensive re-creation cost}, and does not perform poorly with small conditions like ``Query then Filter'' strategies. Furthermore, our method accelerates queries across much smaller subsets than the reconfiguration strategy of \citep{matsui2018reconfigurable}. Please see Section \ref{sec:experimental-details} and \ref{sec:improving} for method details.}
  \label{fig:cknn-speedup}
\end{figure}

This theorem not only shows that sub-trees of an RP tree capture the geometry of training dataset subsets, but also points to a method to improve the speed of CKNN. Namely, we can prune tree nodes that do not hold points within our target subset prior searching for conditional neighbors. We diagram this procedure in Figure \ref{fig:pruning}, and provide pseudo-code in Algorithm \ref{algo:query}. We now turn our attention to quickly computing the proper sub-trees for each subset of the data. To this end, one can use an inverted index \citep{knuth1997art}, $I$, that maps points, $x \in \mathcal{X}$ to the collection of their dominating nodes, $I(x) = \{n : x \text{ below node } n \}$. One can compute the subset of nodes that remain after pruning by taking the union of dominating nodes as shown in the first line of Algorithm \ref{algo:query} and in the illustration of the full search architecture in Figure \ref{fig:pruning}. Evaluating the predicate on points within leaf nodes can also reduce computation. 

Additionally, if the predicates of interest have additional structure, such as representing class labels, one can define a smaller class-based inverted index, $I_{class}(c)$ which maps a class label, $c$, to the set of dominating nodes. For these predicates, union and intersection operators commute through the class-based inverted index: 

\begin{equation} \label{eqn:commute}
\begin{split}
    I(\mathcal{S}_a \cap \mathcal{S}_b) &= I_{class}(a) \cap I_{class}(b)\\
    I(\mathcal{S}_a \cup \mathcal{S}_b) &= I_{class}(a) \cup I_{class}(b)
\end{split}
\end{equation}

where $\mathcal{S}_a$ is the subset of points with label $a$. This principle speeds a broad class of queries and accelerates document retrieval frameworks like ElasticSearch \citep{gormley2015elasticsearch} and its backbone, Lucene \citep{mccandless2010lucene}. We stress that this approach does not use Lucene to filter images or documents directly, but rather to filter nodes of a KNN retrieval data-structure at query time. This enables a rich ``predicate push-down`` \citep{hellerstein1993predicate, levy1994query} logic for KNN methods independent of how the tree splits points (Ball, Hyperplane, Cluster), the branching factor, and the topology of the tree. It also applies to ensembles of trees and to multi-probe LSH methods by pruning hash buckets. We note that our proposed indexing structure is small compared to the size of the underlying dataset, and unconditional KNN tree, and provide an analysis of memory footprints in Table \ref{table:mem}. 

\section{Performance}

In Figure \ref{fig:cknn-speedup}, we show the relative performance of several strategies for CIR on 488k Resnet50-featurized images ($dim=2048$) from the combined MET and Rijksmusem open-access collections with a randomly chosen test set ($n=1000$). We condition on artwork media, culture, and several combinations of these to create a variety of condition sizes. We measure the speedup compared to a vectorized Brute-Force search using NumPy arrays \citep{walt2011numpy}. We implement CKNN methods with respect to one of the most used implementations of KNN, Sci-kit Learn's Ball Tree algorithm \citep{pedregosa2011scikit}. We compare our approach (Conditional) to, the standard ``query-then-filter'' approach, and adaptive switching to brute force search (Reconfigured) \citep{matsui2018reconfigurable}. Finally, we compare to a ``best-case'' scenario of a KNN data-structure pre-computed for every subset (Dedicated). Though in practice it is often impossible to make an index for each subset, this setting provides an upper bound on the performance of any approach. Our analysis shows that adaptive pruning (Conditional) outperforms other approaches and is close to optimal for large subsets of the dataset. Additionally, the performance of the ``Query-then-filter'' strategy quickly degrades for small subsets of the dataset as expected. Our approach is also compatible with prior work on adaptively switching to brute force and allows one to set the ``switch-point'' over 10x lower. We also note that these results hold with randomized conditions, and across other similar datasets.

Finally, we stress that the goal of this work is \textbf{not} to make the fastest unsupervised KNN method, but rather to evaluate generic strategies to transform these approaches to the conditional setting. There is a considerable body of work on fast, \textit{approximate, unconditional} KNN methods which often outperform Scikit Learn's exact retrieval algorithms. We point readers to \citep{knnbenchmark} for more details. We stress that exhaustive benchmarking of unconditional KNN indices and approaches is outside the scope of this work. For implementation, experimentation, environment, and computing details please see Section \ref{sec:experimental-details}.

\begin{figure*}[tbp]
      \includegraphics[width=\linewidth]{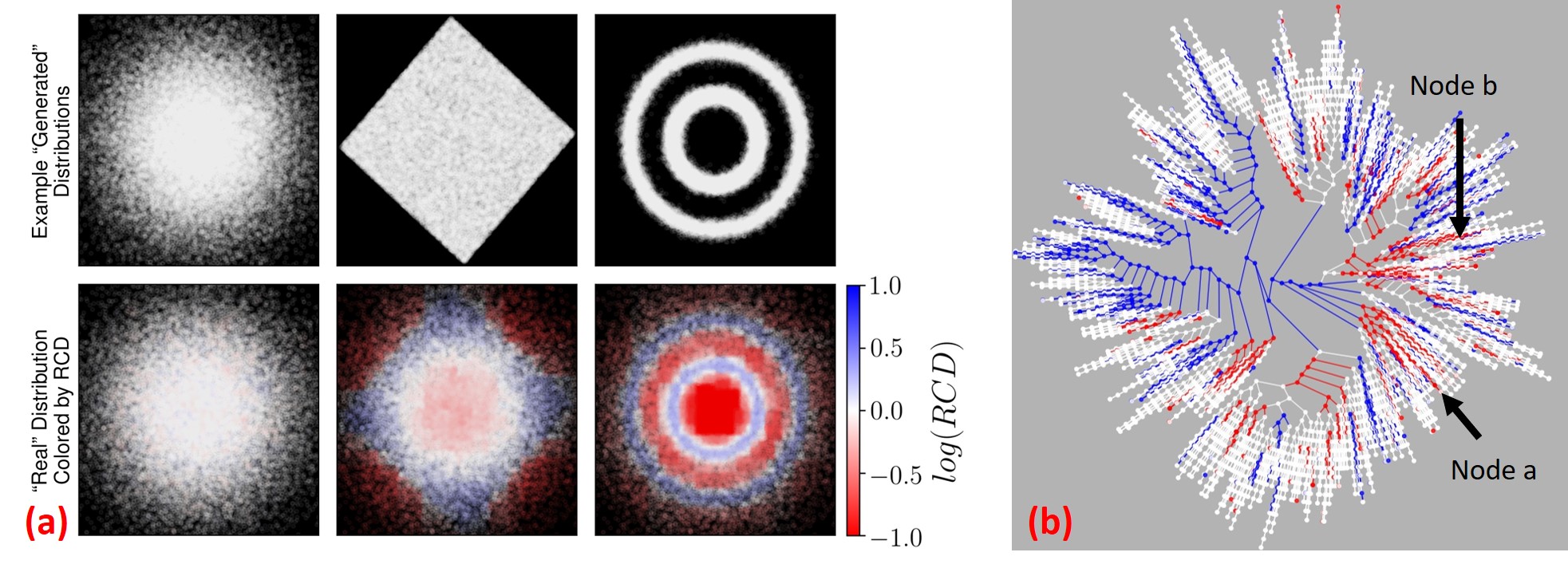}
        \vspace{-.07in}
      \caption{
      (a): Visualization of the RCD between several example distributions and a standard normal of ``real'' data ($n=50k$). Upper plots show generated distributions, and bottom plots show the ``real'' distribution colored by the RCD induced by a CKNN Tree. Even though these datasets are identical under the popular Frech\'et inception distance (FID), the RCD detects areas where generated data over (blue) and under (red) samples the real data.
      (b): Nodes of a CKNN tree (Center node is the root) colored by statistically significant deviations of RCD from 1 ($p < 0.01$). This shows widespread differences between GAN outputs and true data. Red nodes represent areas where the GAN under samples the empirical distribution, and blue nodes over-sample. High discrepancy nodes $a$ and $b$ from Figure \ref{fig:bad_gan_samples} are annotated.}
      \label{fig:gan_tree}
\end{figure*}

\subsection{Implementation}

We implement adaptive tree pruning for the existing Ball Tree and KD tree implementations in the popular SciKit-learn framework. Our implementation supports exact retrieval with several metrics, OpenMP parallelization \citep{dagum1998openmp}, and Cython acceleration\citep{behnel2011cython}. We also provide accelerations such as dense bit-array set operations, and caching node subsets on repeated conditioner queries. For larger scale datasets, we contribute a Spark based implementation of a Conditional Ball Tree to Microsoft ML for Apache Spark \citep{hamilton2018flexible, hamilton2018mmlspark}. 

To enable integration with differentiable architectures common in the community, we provide a high-throughput, PyTorch module \citep{paszke2017automatic} for CIR. This implementation is fundamentally brute force but uses Einstein-summation to retrieve conditional neighbors for multiple queries and multiple conditions simultaneously, and can increase throughput by over 100x compared to naive PyTorch implementations.

\section{Limitations}
This work does not aim to create the fastest KNN algorithm, but rather presents a formally motivated technique to speed up existing tree-based KNN methods in the conditional setting. 
KNN retrieval chooses some items significantly more than others, due to effects such as the ``hubness problem'' and we direct readers to \citep{dinu2014improving} for possible solutions. 
We present additional diversity reducing geometries in Section \ref{sec:low-dim-failure} of the Appendix.
Our approach does not modify the KNN construction, simply prunes it afterwards. This may not be the most efficient solution when conditioner sizes are small, but it is orders of magnitude faster than recreating the tree. We also note that the performance of our conditional KNN methods are dependent on the underlying unconditional KNN tree, which often performs better on datasets with smaller intrinsic dimension. 

\section{Discovering ``Blind Spots'' in GANs}
\label{sec:blind-spot}

Efficient high-dimensional KNN search data-structures adapt to the geometry and intrinsic dimensionality of the dataset \citep{dasgupta2008random,dhesi2010random}. Moreover, some recent KNN methods use approaches from unsupervised learning like hierarchical clustering \citep{wang2011fast} and slicing along PCA directions \citep{bachrach2014speeding}. In this light, CKNN trees allow us to measure and visualize the ``heterogeneity'' of conditioning information within a larger dataset. More specifically, by analyzing the relative frequency of labels within the nodes of a CKNN tree, one can find areas with abnormally high and low label density. More formally, we introduce the Relative Conditioner Density (RCD) to measure the degree of over or under representation of a class $c$ with corresponding subset $\mathcal{S}_c \subseteq \mathcal{X}$, at node $n$ in the KNN tree:

\begin{equation}
    RCD(n, c) = \frac{|n \cap \mathcal{S}_c|}{|n|}\frac{|\mathcal{X}|}{|\mathcal{S}_c|}
    \label{eqn:lpf}
\end{equation}

Here, $|n|$ is the number of points below node $n$ in the tree. The RCD measures how much a node's empirical distribution of labels differs from that of the full dataset. $RCD >1$ occurs when the node over-represents class $c$, and $RCD <1$ occurs when the node under-represents a class, $c$. We apply this statistic to understand how samples from generative models, such as image-based GANs, differ from true data. In particular, one can form a conditional tree containing true data and generated samples, each with their own classes, $c_t$ and $c_g$ respectively. In this context, nodes with $RCD( \cdot, c_{g}) \ll 1$ are regions of space where the network under-represents the real dataset. To illustrate this effect, Figure \ref{fig:gan_tree}a shows several simple 2d examples. Even though these datasets are identical with respect to the Fr\'echet Distance \citep{heusel2017gans}, coloring points based on their parent node RCD's can highlight areas of over and under sampling of the true distribution by each ``generated'' distribution. In Figure \ref{fig:gan_tree}b, we form a CKNN tree on samples from a trained Progressive GAN \citep{karras2017progressive} and it's training dataset, CelebA HQ \citep{liu2015faceattributes}. Coloring the nodes by RCD reveals a considerable amount of statistically significant structural differences between the two distributions. By simply thresholding the RCD ($< 0.6$), we find types of images that GANs struggle to reproduce. We show samples from two low-RCD nodes in Figure \ref{fig:bad_gan_samples} and also note their location in Figure \ref{fig:gan_tree}b. Within these nodes, Progressive GAN struggles to generate realistic images of brimmed hats and microphones. Though we do not focus this work on thoroughly investigating issues of diversity in GANs, this suggests GANs have difficulty representing data that is not in the majority. This aligns with the findings of \citep{bau2019seeing}, without requiring GAN inversion, additional object detection labels, or a semantic segmentation ontology. Furthermore, we note that the FID cannot capture the full richness of why two distributions differ, as this metric just measures differences between high dimensional means and co-variances. Using CKNN trees can offer more flexible and interpretable ways to understand the differences between two high dimensional distributions.


\begin{figure*}[tbp]
\includegraphics[width=\linewidth]{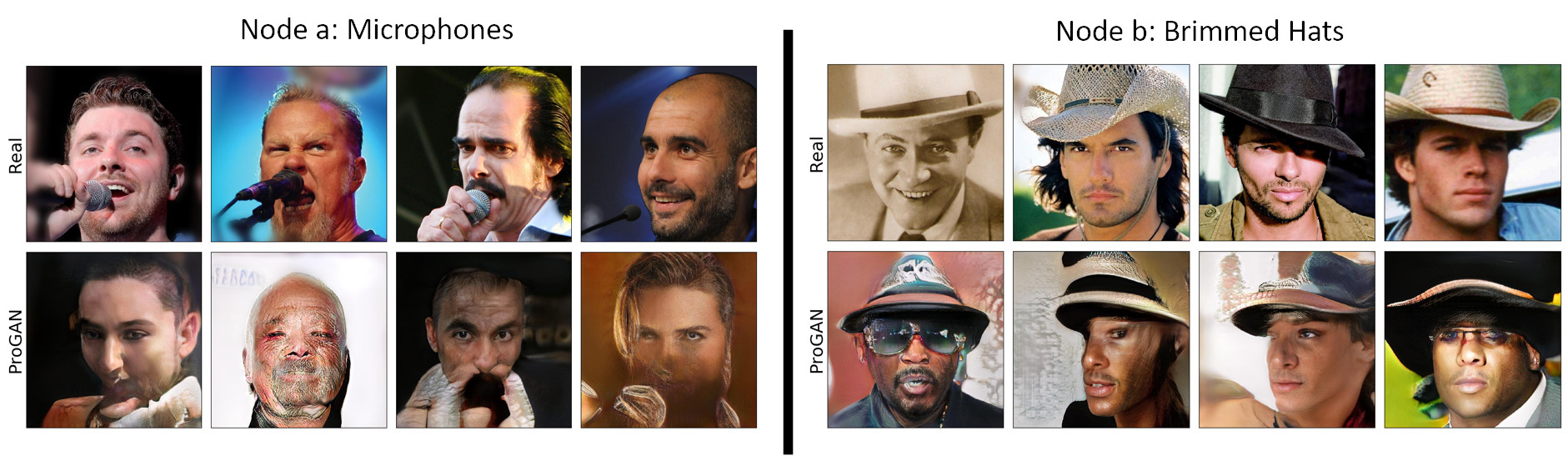}
  \vspace{-.2in}
  \caption{Samples from two statistically significant nodes from Figure \ref{fig:gan_tree}. Images are randomly chosen and representative of those found at the node. Almost every real image in Node a contains microphones whereas no GAN generated outputs could create a microphone. Node b shows a clear bias towards brimmed hats, and the GAN samples have significant visual artifacts.}
  \label{fig:bad_gan_samples}
\end{figure*}

\section{Experimental Details}
\label{sec:experimental-details}

All experiments use an Ubuntu 16.04 Azure NV24 Virtual Machine with Python 3.7 and scikit-learn v0.22.2 \citep{pedregosa2011scikit}. We use scikit-learn's Ball Tree and KD Tree and use numpy v1.18.1 \citep{walt2011numpy} for brute force retrieval.
For query-then-filter strategies we first retrieve 50 points, then increase geometrically (x5) if the query yeilds no valid matches.
To form image features for Table \ref{table:cir_recall}, we use trained networks from torchvision v0.6 \citep{marcel2010torchvision}. In particular, we use ResNet50 (RN50) \citep{he2016identity}, ResNet101 (RN101), MobileNetV2 (MN) \citep{mnv2}, SqueezeNet (SN) \citep{iandola2016squeezenet}, DenseNet (DN) \citep{iandola2014densenet}, ResNeXt (RNext) \citep{resnext}, DeepLabV3 ResNet101 (dlv3101) \citep{dlv3}, and Mask R-CNN (MRCNN) \citep{mrcnn}. Features are taken from the penultimate layer of the backbone, and the matches of Table \ref{table:cir_recall} are computed with respect to cosine distance. We use trained a Progressive GAN from the open-source Tensorflow 
implementation accompanying \citep{karras2017progressive}.

\section{Related Work}

Image retrieval and nearest neighbor methods have been thoroughly studied in the literature, but we note that the conditional setting has only received attention recently. There are several survey works on KNN retrieval, but they only mention unconditional varieties \citep{bhatia2010survey, hashing-summary}. \citep{marchiori2009class} has studied the mathematical properties of conditional nearest neighbor classifiers but works primarily with graph based methods as opposed to trees. They do not apply this to modern deep features and do not aim to improve query speed. There are a wide variety featurization strategies for IR systems. Gordo et. al \citep{gordo2016deep} learn features optimized for IR. Siamese networks such as FaceNet embed data using tuples of two data and a similarity score and preserving this similarity in the embedding \citep{koch2015siamese,Schroff_2015_CVPR}. Features from these methods could improve CIR systems. Conditional Similarity Networks augment tuple embedding approaches with the ability to handle different notions of similarity with different embedding dimensions \citep{veit2016conditional}. This models conditions as similarities but does not generically restrict the search space of retrieved images to match a user's query. These features have potential to yield neighbor trees that, when pruned, have a similar structure and performance to dedicated trees. Sketch-based IR uses line-drawings as query-images but does not aim to restrict the set of candidate images generically \citep{sketchretrieval}. Style transfer \citep{style-review} and visual analogies \citep{liao2017visual} yield results like our art exploration tool but generate the analogous images rather than retrieve them from an existing corpus. \citep{conditional-retrieval} split IR systems into conditional subsystems, but do not tackle generic conditioners or provide experimental evaluation. \citep{plummer2018conditional} create an IR system conditioned on text input, but do not address the problem of generically filtering results. \citep{gao2020interpretable} and \citep{liao2018interpretable} respectively learn and use a hierarchy of concepts concurrently with IR features, which could be a compelling way to \textit{learn} useful conditions for a Conditional IR system.

%

\section{Conclusion}

We have shown that Conditional Image Retrieval yields new ways to find visually and semantically similar images across corpora. We presented a novel approach for discovering hidden connections in large corpora of art and have creates an interactive web application, MosAIc to allow the public to explore the technique. We have shown that CIR performs non-parametric style transfer on the FEI faces and two newly introduced datasets. We proved a bound on the number of nodes that can be pruned from RandomProjection trees when focusing on subsets of the training data and used this insight to develop a general strategy for generalizing tree-based KNN methods to the conditional setting. We demonstrated that this approach speeds conditional queries and outperforms baselines. Lastly, we showed that CKNN data-structures can find and quantify subtle discrepancies between high dimensional distributions and used this approach to identify several ``blind spots'' in the ProGAN network trained on CelebA HQ. 

\acks{We would like to thank the Microsoft Garage program for supporting the development of the MosAIc application especially Chris Templeman, Linda Thackery, and Jean-Yves Ntamwemezi. Additionally we would like to thank Anand Raman, Markus Weimar, and Sudarshan Raghunathan for their feedback on the work and for their support of the work. }

\bibliography{jmlr-sample}

\newpage

\appendix

\section{Visualizing Failure Cases}
\label{sec:low-dim-failure}

Figure \ref{fig:failure} (a) shows how conditioners that do not share a common support can yield low diversity conditional neighbors. Though sharing a common support is certainly helpful, it is not mandatory as shown by Figure \ref{fig:failure} (b). Some potential mitigations for these effects could be to fine tune learned embeddings to promote diverse queries, or to re-weight query outputs based on diversity. Additionally, an initial alignment with an optimal transport method could mitigate these effects \citep{grave2018unsupervised}.

\begin{figure}[h]
\centering
\includegraphics[width=.7\linewidth]{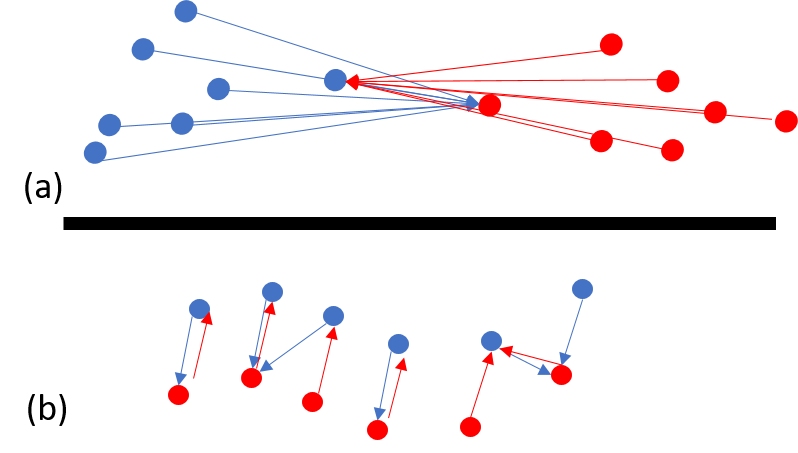}
  \caption{A schematic illustration of how conditional KNN can yield to a lack of diversity in particular geometries. (a) shows how low diversity can occur when there is no overlap of supports. Figure (b) shows how support intersection is not necessary for quality alignment}
  \label{fig:failure}
\end{figure}

\section{Additional Matches}

In addition to the matches displayed in Figure \ref{fig:feature-graphic} we provide several additional results. Figure \ref{fig:additional-match1} shows additional matches for a single query, and Figure \ref{fig:additional-match2} shows matches across several different queries. Figure \ref{fig:random-matches} shows random matches to give a sense of the method's average-case results.

\label{sec:additional-matches}
\begin{figure*}[h]
    \centering
    \includegraphics[width=\textwidth]{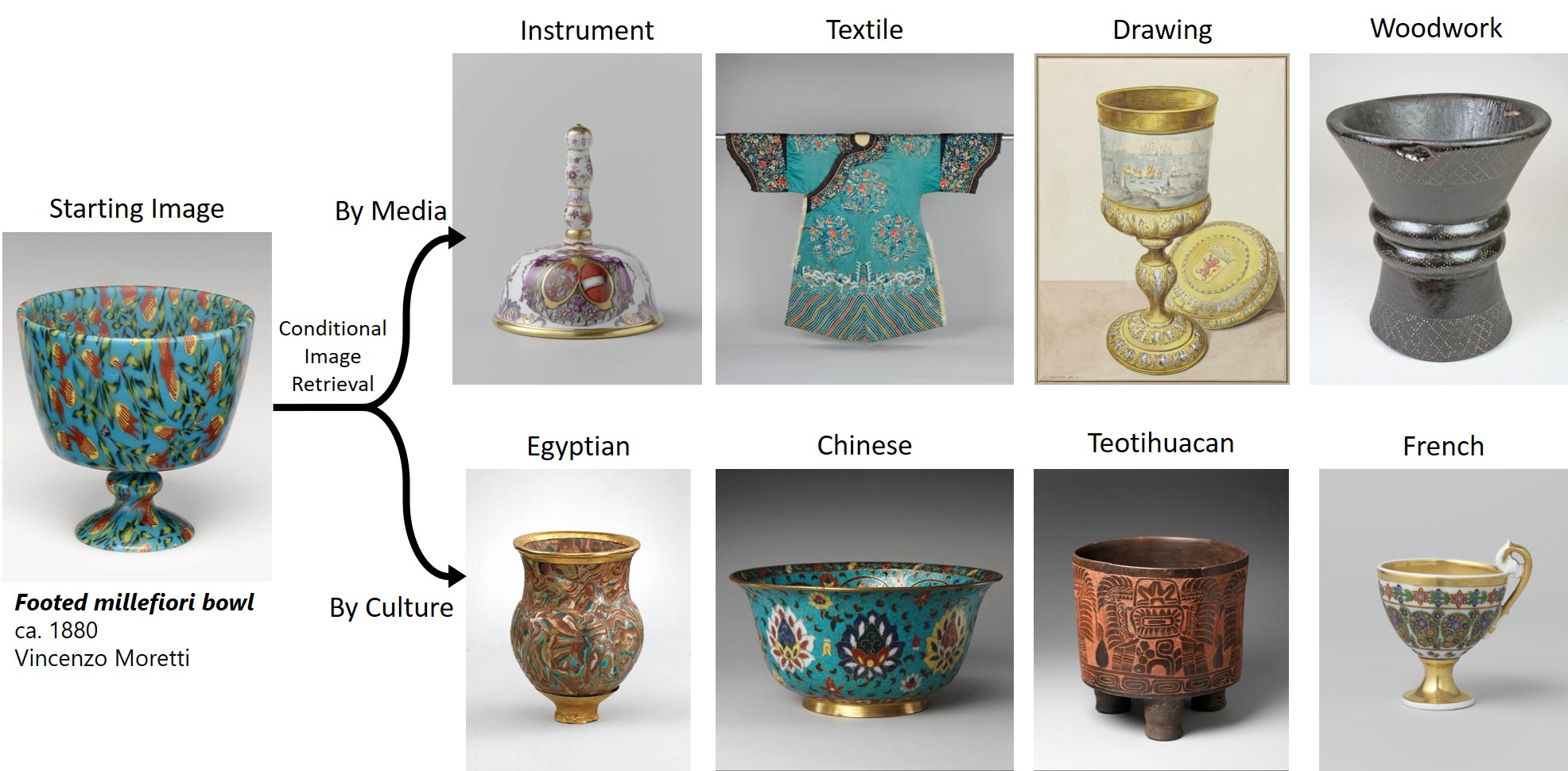} 
    \caption{Additional conditional image retrieval results on artworks from the Metropolitan Museum of Art and Rijksmuseum using media (top row text) and culture (bottom row text) as conditioners.}
    \label{fig:additional-match1}
\end{figure*}

\begin{figure*}[h]
    \centering
    \includegraphics[width=\textwidth]{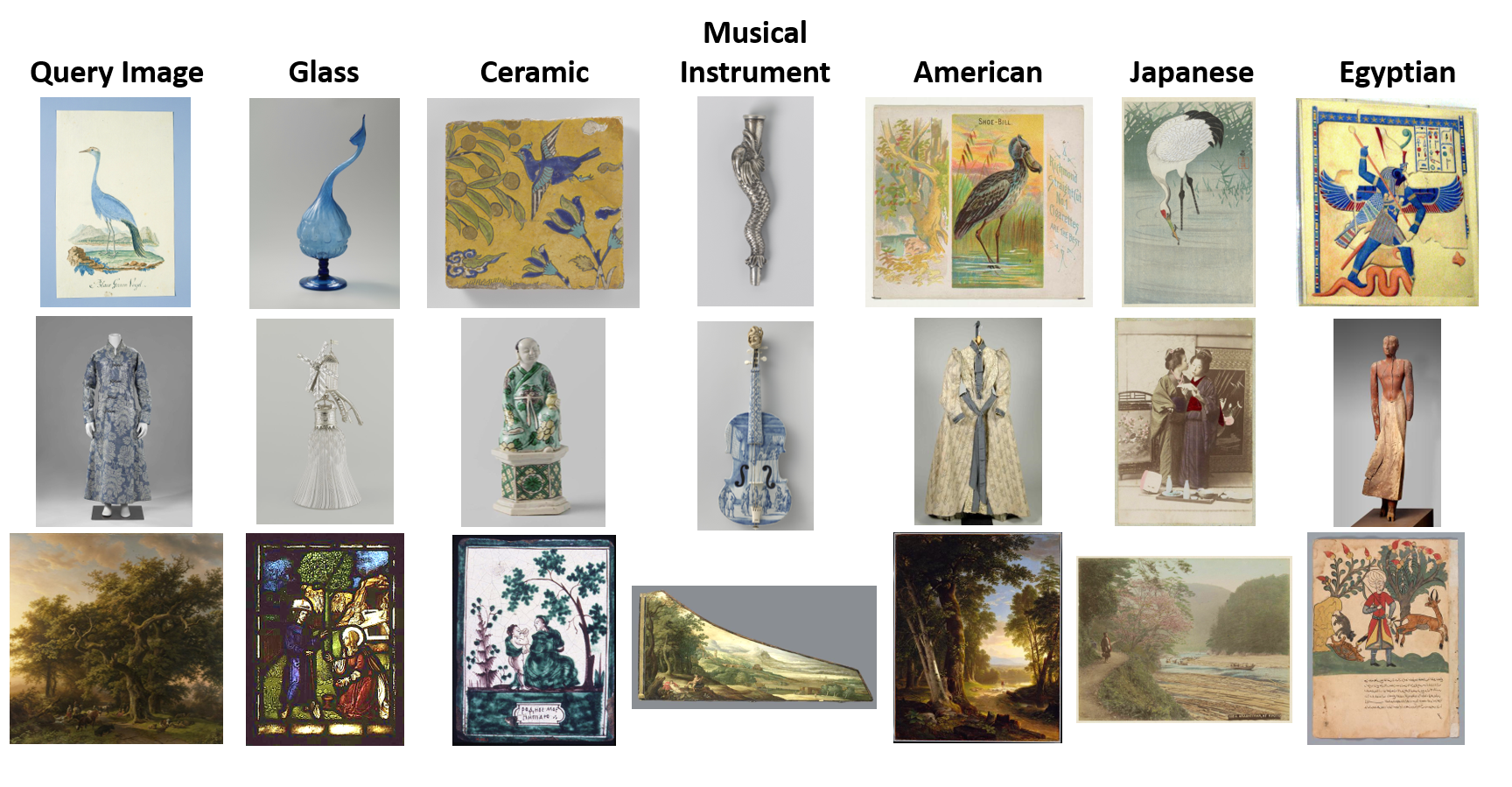} 
    \caption{Additional conditional image retrieval results on artworks from the Metropolitan Museum of Art and Rijksmuseum using top row text as conditioners.}
    \label{fig:additional-match2}
\end{figure*}

\begin{figure*}[h]
    \centering
    \includegraphics[width=\textwidth]{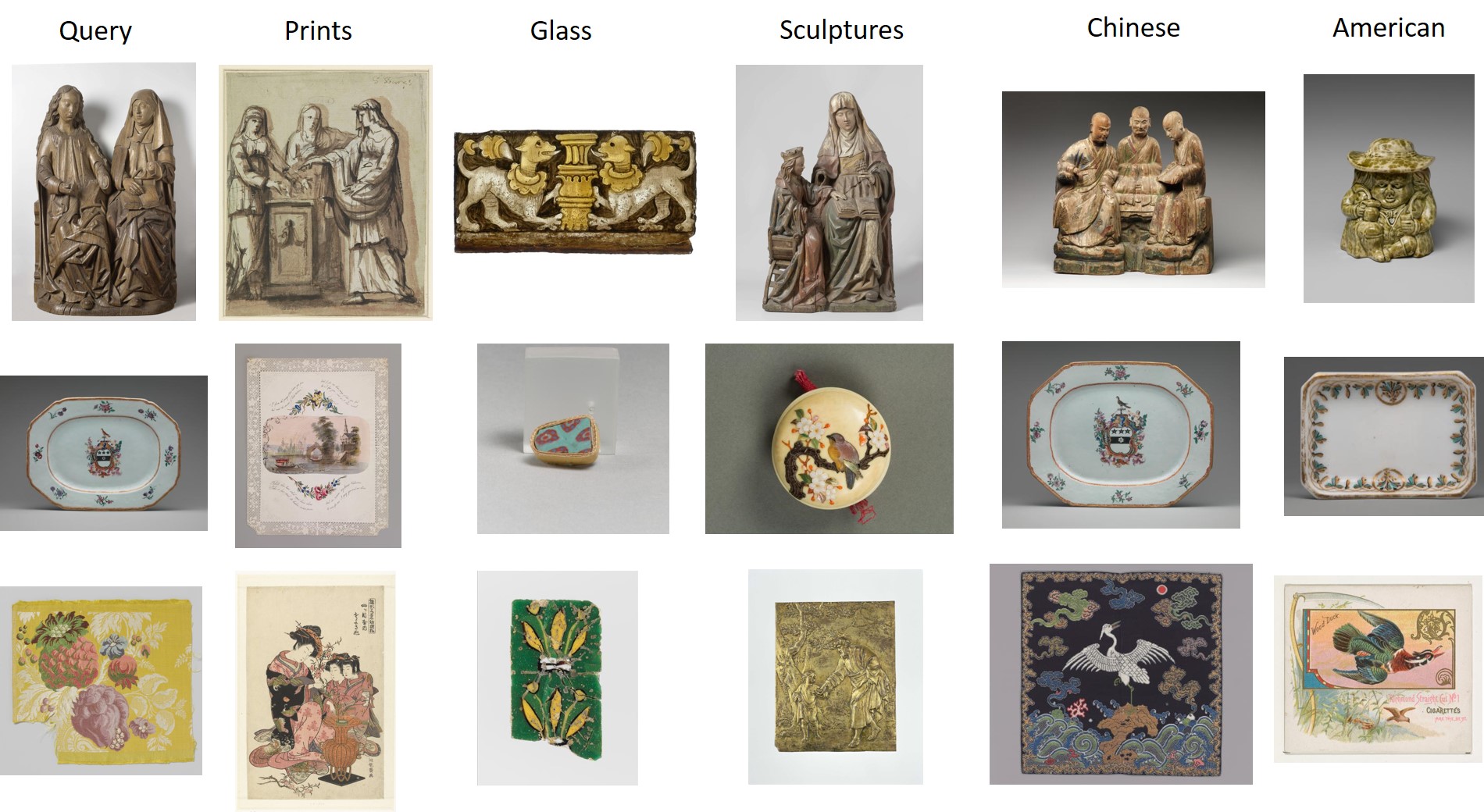} 
    \caption{Randomly selected conditional image retrieval results on artworks from the Metropolitan Museum of Art and Rijksmuseum using top row text as conditioners.}
    \label{fig:random-matches}

\end{figure*}

\newpage

\section{Proof of Theorem 1}

\label{sec:proof}

In the following analysis, suppose an \textsc{RPTree-Max} is built using a dataset $\mathcal{X} \subset \mathbb{R}^D$, of diameter $W$, with doubling dimension $\leq d$. Furthermore, assume that the size reduction rate at any given level of the tree is bounded above by $\gamma$.

\begin{lemma}
\label{lemma:1}
For any ball, $B$ of radius $R>0$ and any $0 < \epsilon < 1$, there exists a constant $c_1 > 0$ such that with probability $>1-\epsilon$, $B$ will be completely inscribed inside of an \textsc{RPTree-Max} cell of radius no more than $c_1 R d \sqrt{d} \log(d)$
\end{lemma}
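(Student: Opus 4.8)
The plan is to follow the root-to-leaf chain of \textsc{RPTree-Max} cells $C_0 \supseteq C_1 \supseteq \cdots$ containing the center $x_0$ of $B$, and to show that with high probability no split severs $B$ before the current cell has already shrunk to radius $O(R\,d\sqrt{d}\log d)$. Fix a target radius $\rho := c_1\,R\,d\sqrt{d}\log d$, with the constant $c_1$ to be chosen last, and let $L$ be the first level with $\operatorname{rad}(C_L) \le \rho$. Since $\mathcal{T}$ is balanced with a per-level cell-size reduction of geometric rate governed by $\gamma$, the radii $\operatorname{rad}(C_0) \ge \operatorname{rad}(C_1) \ge \cdots$ decrease geometrically, so $L = O(\log_\gamma(W/\rho))$ and $\operatorname{rad}(C_{L-k}) = \Omega(\gamma^{k}\rho)$ for $k \ge 0$. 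If no split at a level $j < L$ cuts through $B$, then $B \subseteq C_L$, whose radius is at most $\rho$, proving the lemma; hence it suffices to bound by $\epsilon$ the probability of the event $E$ that $B$ is severed at some level $j < L$.

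The heart of the argument is a per-level estimate. A \textsc{RPTree-Max} split of $C_j$ draws a uniformly random unit direction $v$ and places its threshold $t$ so that, by the \textsc{RPTree-Max} splitting rule \citep{dasgupta2008random,dhesi2010random}, conditioned on $v$ it has density $O(1/e_j(v))$, where $e_j(v)$ is the spread of the projected point set $\{\,v\cdot x : x\in C_j\,\}$ used to choose the split. Since the orthogonal projection of $B$ onto any line is an interval of length exactly $2R$, the split severs $B$ only if $t$ lands in that interval, so $\Pr[\,B \text{ cut at level } j \mid v\,] \le O(R)/e_j(v)$. What remains is a lower bound on $e_j(v)$ in terms of $\operatorname{rad}(C_j)$, and this is exactly where we generalize the aspect bound of \citep{dhesi2010random}: we argue that, outside an event of probability at most $\epsilon/2$ taken over all the randomness used to build $\mathcal{T}$, \emph{every} cell $C$ of the tree satisfies $e(v) \ge \operatorname{rad}(C)/\big(c_2\,d\sqrt{d}\log d\big)$ for its own splitting direction $v$ — equivalently, the pertinent aspect ratio, circumradius over projected split-width, is $O(d\sqrt{d}\log d)$, with the $d\sqrt{d}\log d$ factor and its derivation through the doubling dimension inherited from \citep{dhesi2010random} and only the bookkeeping changed from cell diameters to projected extents.

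On this good event, for each $j = L-k < L$ we get $\Pr[\,B\text{ cut at level }j\,] \le O\big(c_2\,R\,d\sqrt{d}\log d\big)/\operatorname{rad}(C_j) = O(c_2)/(c_1\gamma^{k})$, so a union bound over the levels $0,\dots,L-1$ contributes a geometric series of total mass at most $O(c_2)/\big(c_1(\gamma-1)\big)$, which is independent of $W$, $R$, and $d$. Choosing $c_1$ large in terms of $\epsilon$, $\gamma$, and the absolute constant $c_2$ makes this at most $\epsilon/2$; together with the $\epsilon/2$ from the aspect event we obtain $\Pr[E] \le \epsilon$, so $B \subseteq C_L$ and $\operatorname{rad}(C_L) \le c_1\,R\,d\sqrt{d}\log d$ with probability $>1-\epsilon$, as claimed. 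I expect essentially all the difficulty to sit in the aspect step of the second paragraph — carrying the \citep{dhesi2010random} analysis through for the distribution of the projected \emph{extent} (hence of the split threshold) rather than the cell diameter, and making that lower bound hold simultaneously along the entire path via a union bound over cells. The remaining pieces — the length-$2R$ projection of $B$, the geometric decay of the $\operatorname{rad}(C_j)$, and the final calibration of $c_1$ — should be routine.
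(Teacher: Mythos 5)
Your proposal is correct and follows essentially the same route as the paper: the paper's proof of Lemma~\ref{lemma:1} is a one-line rescaling of Theorem 12 of \citep{dhesi2010random} --- setting $\Delta^* = \frac{1}{\epsilon}c_5 R d\sqrt{d}\log(d)$ so the failure probability drops to $\epsilon$ --- and your argument (follow the chain of cells containing the center of $B$, bound the per-level cut probability by the length-$2R$ projection of $B$ divided by the projected split spread via the aspect bound, sum the resulting geometric series over levels, and absorb the $\epsilon$-dependence into $c_1$) is precisely a reconstruction of that theorem's proof. The one point needing care is that your geometric series requires a \emph{lower} bound on the per-level radius decay (which \citep{dhesi2010random} supply via their guarantee that the cell radius halves every $O(d\log d)$ levels), whereas the hypothesis as stated in this paper only bounds the reduction rate from above by $\gamma$.
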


\begin{proof}
We modify the proof of Theorem 12 from \citep{dhesi2010random}. In particular we let $\Delta^* = \frac{1}{\epsilon}c_5Rd\sqrt{d}\log(d)$, where $c_5$ refers to the constant of Lemma 11 of \citep{dhesi2010random} The rest of the proof proceeds without modification. 
\end{proof}

\begin{lemma}
\label{lemma:2}
For any finite set of balls, $\{B_i\}$, with constant radii $R>0$, and any $0 < \epsilon < 1$, there exists a constant $c_2 > 0$ such that with probability $>1-\epsilon$, every $B_i$ will be completely inscribed inside of an \textsc{RPTree-Max} cell of radius no more than $c_2 R d \sqrt{d} \log(d)$
\end{lemma}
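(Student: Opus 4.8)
The plan is to obtain Lemma \ref{lemma:2} from Lemma \ref{lemma:1} by a routine union bound over the finitely many balls. Write $m = |\{B_i\}|$ for the number of balls in the collection. First I would apply Lemma \ref{lemma:1} separately to each ball $B_i$, but with the failure probability rescaled from $\epsilon$ to $\epsilon/m$: Lemma \ref{lemma:1} then supplies a constant $c_1 > 0$ such that, with probability at least $1 - \epsilon/m$, the ball $B_i$ is completely inscribed inside some \textsc{RPTree-Max} cell of radius at most $c_1 R d \sqrt{d} \log(d)$. Inspecting the proof of Lemma \ref{lemma:1}, this constant is the universal constant $c_5$ of Lemma 11 of \citep{dhesi2010random} divided by the (rescaled) failure probability, hence of size $\Theta(m/\epsilon)$. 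Since all the $B_i$ share the common radius $R$ and we use the same rescaled failure probability for each index, the resulting bound $c_1 R d \sqrt{d} \log(d)$ on the radius of the enclosing cell is identical across $i$.

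Next, let $E_i$ be the bad event that $B_i$ fails to be inscribed in such a cell, so $\Pr[E_i] < \epsilon/m$. The events $E_i$ need not be independent, but the union bound requires nothing of the kind: $\Pr\left[\bigcup_{i} E_i\right] \le \sum_{i} \Pr[E_i] < m \cdot (\epsilon/m) = \epsilon$. On the complementary event, which has probability greater than $1 - \epsilon$, every $B_i$ is simultaneously inscribed in an \textsc{RPTree-Max} cell of radius at most $c_2 R d \sqrt{d} \log(d)$, where we simply take $c_2 = c_1$. Note that different $B_i$ may be inscribed in different cells, at different depths of the tree; the conclusion only asserts that each is enclosed by some cell whose radius obeys the common bound, which is precisely what is needed when one later converts enclosing-cell radii into depth bounds and hence into the node-fraction estimate of Theorem 1.

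There is essentially no hard step here; the only point requiring care is bookkeeping of the constant. Because the constant produced by Lemma \ref{lemma:1} degrades like the reciprocal of the failure probability, shrinking the per-ball failure probability from $\epsilon$ to $\epsilon/m$ inflates the constant by a factor of $m$, so $c_2$ depends on both $\epsilon$ and the cardinality $m = |\{B_i\}|$. This dependence is harmless: $m$ is finite and fixed, and in the application to Theorem 1 it is exactly the covering number $|\mathcal{B}|$, which already appears explicitly in the final bound, so no dependence is introduced that is not already accounted for downstream.
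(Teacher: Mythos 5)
Your proof is correct, and it is in fact cleaner than the one in the paper. You take the standard route: apply Lemma \ref{lemma:1} to each of the $m$ balls with per-ball failure probability $\epsilon/m$, take the union bound over the $m$ bad events (which, as you note, requires no independence), and set $c_2$ to the common constant, observing that the cost is a factor-of-$m$ inflation of the constant coming from the $1/\epsilon$ dependence inside Lemma \ref{lemma:1}. The paper instead argues by induction on the number of balls, allocating failure probability $\epsilon/8$ to the first $n$ balls and $\epsilon/8$ to the new ball and taking $c_2$ as the max of the two constants. The paper's probability bookkeeping at the inductive step is shakier than yours: it bounds the probability that both events occur by the \emph{product} $(1-\epsilon/8)(1-\epsilon/8)$, which implicitly assumes independence of the two events (they are both functions of the same random tree, so independence is unwarranted), and the displayed arithmetic contains a sign slip and an inverted inequality. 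The intended conclusion is salvageable by replacing the product with the two-event union bound ($\Pr[\text{both}] \ge 1 - \epsilon/8 - \epsilon/8 > 1-\epsilon$), at which point the paper's induction is just an iterated version of your argument. Your direct union bound avoids these pitfalls entirely, makes the dependence of $c_2$ on $m$ and $\epsilon$ explicit, and is the form one would actually want when plugging into Theorem 1, where $m = |\mathcal{B}|$ already appears in the final bound.
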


\begin{proof}
We proceed by induction on the number of balls. Lemma \ref{lemma:1} provides the base case of $|\{B_i\}| = 1$. For the inductive case we assume the lemma holds for a set $\{B_i\}$ of size $n$, with $\epsilon' = \frac{\epsilon}{8}$ and constant $c'_2$. Given an additional $B_{n+1}$, we can leverage our base case to select an $\epsilon'' = \frac{\epsilon}{8}$ and constant $c''_2$. We can see that the probability that both events occur simultaneously is bounded above by:
$$ (1 - \epsilon')(1 - \epsilon'') = (1 - \frac{\epsilon}{8})(1 - \frac{\epsilon}{8}) = 1 - \frac{\epsilon}{4} -  \frac{\epsilon^2}{64} < 1 - \epsilon$$
Finally, using the new constant, $c_2 = \max(c'_2, c''_2)$, the radii criterion holds for all balls. 
\end{proof}

\begin{theorem} (Restatement of Theorem 1)  Suppose an \textsc{RPTree-Max}, $\mathcal{T}$, is built using a dataset $\mathcal{X} \subset \mathbb{R}^D$, of diameter $W$, with doubling dimension $\leq d$. Further suppose $\mathcal{T}$ is balanced with a cell-size reduction rate bounded above by $\gamma$. Let $\mathcal{S} \subseteq \mathcal{X}$ be a subset of the dataset used to build the tree and $\mathcal{B}$ a finite set of radius $R>0$ balls that cover $\mathcal{S}$. For every $0< \epsilon < 1$ there exists a constant, $c > 0$, such that with probability $> 1 - \epsilon$ the fraction of cells that contain points within $\mathcal{S}$ is bounded above by $|\mathcal{B}|2^{-log_\gamma(W/R')}$ where $R' = c R d \sqrt{d} \log(d)$
\end{theorem}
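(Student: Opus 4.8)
The plan is to combine the ``inscription'' guarantee of Lemma~\ref{lemma:2} with a depth‑counting argument on the balanced tree. First I would apply Lemma~\ref{lemma:2} directly to the finite cover $\mathcal{B}$, which is exactly a finite family of radius‑$R$ balls: for the given $\epsilon$, this produces a constant (take $c := c_2$ from that lemma) such that, with probability $> 1-\epsilon$, \emph{every} ball $B_i \in \mathcal{B}$ is completely inscribed inside some \textsc{RPTree-Max} cell $C_i$ of radius at most $R' = c R d \sqrt{d}\log(d)$. For the remainder of the argument I condition on this event, so the randomness has been ``used up'' at this point and everything that follows is deterministic.

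Next I would convert the statement ``$C_i$ has radius $\le R'$'' into a \emph{lower} bound on the depth of $C_i$ in $\mathcal{T}$. Since the root cell has diameter $W$ and, by hypothesis, the cell radius shrinks by at most a factor of $\gamma$ per level along any root‑to‑leaf path, the radius of any cell at depth $\ell$ is at least $W\gamma^{-\ell}$; hence a cell of radius $\le R'$ cannot appear before depth $\ell^\star := \log_\gamma(W/R')$. Because $\mathcal{T}$ is balanced and binary, a node at depth $\ell^\star$ roots a subtree containing at most a $2^{-\ell^\star}$ fraction of the cells of $\mathcal{T}$ (cleanest to see by counting leaf cells: a subtree rooted at depth $\ell^\star$ holds at most $2^{-\ell^\star}$ of the leaves, and in a balanced binary tree the number of internal nodes is within a factor $2$ of the number of leaves, so the two notions of ``fraction of cells'' agree up to constants). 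Now I would invoke the covering property: since $\mathcal{B}$ covers $\mathcal{S}$, every point $s \in \mathcal{S}$ lies in some $B_i \subseteq C_i$, and since the cells along a root‑to‑leaf path form a laminar family, any cell that contains $s$ is nested with $C_i$ — either a descendant of $C_i$ (hence in $\mathrm{subtree}(C_i)$) or one of its at most $O(\log n)$ ancestors. The ancestor cells form a lower‑order term (they contribute $O(|\mathcal{B}|\log n)$ cells against $\Theta(n/l)$ total, and vanish entirely if one counts only leaf cells), so a union bound over the $|\mathcal{B}|$ cells $C_1,\dots,C_{|\mathcal{B}|}$ shows that at most a $|\mathcal{B}|\,2^{-\ell^\star} = |\mathcal{B}|\,2^{-\log_\gamma(W/R')}$ fraction of the cells of $\mathcal{T}$ contain a point of $\mathcal{S}$, which is the claimed bound.

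The step I expect to be the main obstacle is the passage from ``cell radius $\le R'$'' to ``depth $\ge \log_\gamma(W/R')$'': this requires pinning down exactly what the hypotheses ``balanced'' and ``cell‑size reduction rate bounded above by $\gamma$'' are meant to assert for an \textsc{RPTree-Max}, whose splits are randomized and data‑dependent so that different branches shrink at different rates, and in particular ruling out that a cell becomes small far earlier than the worst‑case per‑level reduction rate predicts. A secondary, more cosmetic difficulty is the bookkeeping around ``fraction of cells'' — internal nodes versus leaves versus the shared ancestor paths of the $C_i$ — which I would handle by phrasing the final count in terms of leaf cells and then transferring to all cells up to the constant‑factor balance of a binary tree.
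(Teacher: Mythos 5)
Your proposal is correct and follows essentially the same route as the paper's proof: invoke Lemma~2 to inscribe every covering ball in a cell of radius $R'$, lower-bound that cell's depth by $\log_\gamma(W/R')$ via the per-level cell-size reduction rate, use the balance assumption to bound that subtree's share of the cells by $2^{-\log_\gamma(W/R')}$, and sum over the $|\mathcal{B}|$ balls. If anything you are more careful than the paper, which silently ignores the ancestor cells of each inscribing cell (all of which also contain points of $\mathcal{S}$); your leaf-counting / lower-order-term remark patches a gap the published argument leaves open.
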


\begin{proof}
We begin by invoking Lemma \ref{lemma:2}, which shows that each ball of our covering will end up completely inscribed within small radii cells of $\mathcal{T}$. For each ball we upper bound their contribution to the total fraction of cells that contain points within $\mathcal{S}$. 

Consider any ball $B_i \in \mathcal{B}$ in the covering. By Lemma \ref{lemma:2} we know this ball is inscribed within a cell of radius $R'$. Our goal is to show that this cell must be several levels down in the tree. By our regularity conditions we know that at each subsequent level of a tree, the cell size decreases by at most a factor of $\gamma$. So to achieve the reduction in size from $W$ to $R'$, the cell must lie at or below level $\log_\gamma(W/R')$. At worst, every child of our cell contains a point within $\mathcal{S}$. Because $\mathcal{T}$ is balanced, the ratio of cell children to total cells of the tree is at most $2^{-log_\gamma(W/R')}$. At worst each ball of the cover, $\mathcal{B}$, is in a separate branch of the tree so combining these contributions yields $|\mathcal{B}|2^{-log_\gamma(W/R')}$.  

\end{proof}

\end{document}